\def\eqref#1{equation~\ref{#1}}
\def\1{\bm{1}}
\def\dd{{\mathrm{d}}}
\DeclareMathAlphabet{\mathsfit}{\encodingdefault}{\sfdefault}{m}{sl}
\SetMathAlphabet{\mathsfit}{bold}{\encodingdefault}{\sfdefault}{bx}{n}
\newcommand{\E}{\mathbb{E}}
\newtheorem{theorem}{Theorem}
\crefname{section}{Sec.}{Secs.}
\Crefname{section}{Section}{Sections}
\Crefname{table}{Table}{Tables}
\crefname{table}{Tab.}{Tabs.}
\Crefname{append}{Appendix}{Appendixs}
\crefname{append}{Append.}{Appends.}
\Crefname{subfigure}{Figure}{Figures}
\crefname{subfigure}{Fig.}{Figs.}
\pgfplotsset{compat=1.18}
\newcommand{\mname}{{Di-Bregman}}
\newcommand{\method}{\mname{}}
\definecolor{soft_purple}{RGB}{163,146,237} 
\title{One-step Diffusion Models with Bregman Density Ratio Matching}
\author{Yuanzhi Zhu\thanks{share the same office}$^{\,\,\,1}$
\quad
Eleftherios Tsonis$^{*1}$
\quad
Lucas Degeorge$^{*1,2,3}$
\quad
Vicky Kalogeiton$^{1}$\\
\scalebox{0.88}{$^1$LIX, École Polytechnique, CNRS, IPP 
$^2$LIGM, École Nationale des Ponts et Chaussées, CNRS, IPP
$^3$AMIAD}
}
\begin{document}

\maketitle

\begin{abstract}

Diffusion and flow models achieve high generative quality but remain computationally expensive due to slow multi-step sampling. Distillation methods accelerate them by training fast student generators, yet most existing objectives lack a unified theoretical foundation. 
In this work, we propose \method{}, a compact framework that formulates diffusion distillation as Bregman divergence-based density-ratio matching. This convex-analytic view connects several existing objectives through a common lens. 
Experiments on CIFAR-10 and text-to-image generation demonstrate that \method{} achieves improved one-step FID over reverse-KL distillation and maintains high visual fidelity compared to the teacher model. Our results highlight Bregman density-ratio matching as a practical and theoretically-grounded route toward efficient one-step diffusion generation.

    
\end{abstract}

\section{Introduction}

Diffusion models have become a cornerstone of generative modeling, attaining state-of-the-art performance across modalities and tasks \cite{rombach2022high, peebles2023scalable,dufour2024don,lu2024mace,wang2025akira,degeorge2025farimagenettexttoimagegeneration,boudier2025dipsy,courant2025pulp,dufour2024don}. Yet their sampling process remains prohibitively slow, often requiring hundreds of network evaluations per sample. This has motivated an active line of research on distillation: training fast student generators that reproduce a pre-trained teacher’s output in one or few steps. Current approaches can be broadly categorized as ODE-based, which learn consistency mappings along the teacher’s probability-flow ODE, and distribution-based, which directly match the generator’s output distribution to that of the teacher or data. 
Recent work has advanced distribution-based distillation beyond simple regression or score-matching losses. Variational Score Distillation (VSD) \cite{wang2024prolificdreamer} and Distribution Matching Distillation (DMD) \cite{yin2024one} define objectives based on reverse-KL between student and teacher models. $f$-distill \cite{xu2025one} reframed these methods through the lens of $f$-divergences. Despite this progress, a general perceptive that explains these objectives in a simple mathematical form remains missing.

We introduce \method{}, a general framework that formulates diffusion distillation as Bregman divergence-based density-ratio matching. The central insight is that aligning the student distribution $q(x)$ with the teacher $p(x)$ can be viewed as driving the ratio $r(x) = \frac{q(x)}{p(x)}$ toward constant one, under a suitable convex function $h$. This perspective yields a closed-form gradient (Theorem \ref{thm:bregman-gradient}) with weighting $h''(r)r$. Under this formulation, familiar objectives, such as KL- or MSE-based distillation arise as specific choices of $h$. The result is a concise, interpretable expression that connects multiple existing formulations within a single theoretical framework.

Beyond theory, \method{} remains practical. To get the weightning coefficient $h''(r)r$, we estimate density ratios through a simple classifier trained to distinguish student samples from real data, enabling efficient training without repeated teacher simulation and allowing optional adversarial refinement. 
Preliminary results on both unconditional image and text-to-image generation demonstrate that our approach attains improved one-step FID than reverse-KL distillation and maintains visual fidelity comparable to the multi-step teacher models.

In summary, our contributions are:

\begin{itemize}
    \item We introduce a unified formulation of diffusion distillation based on Bregman density-ratio matching, which yields a closed-form gradient interpretation, 
    \item We propose a practical classifier-based training procedure that effectively instantiates this formulation and validate it on early benchmarks.
\end{itemize}

\section{Preliminaries}
\label{sec:background}

\subsection{Variational Score Distillation}
Variational Score Distillation (VSD) \citep{wang2024prolificdreamer} was introduced to mitigate mode-seeking and over-saturation  
\footnote{The SDS objective tends to produce solutions corresponding to the mode of the averaged likelihood, leading to mode-seeking behavior. Moreover, a high Classifier Free Guidance (CFG) scale can cause over-saturated and over-smoothed generation results.} issues observed when using Score Distillation Sampling (SDS) for 3D asset generation \citep{poole2022dreamfusion}. Importantly, the VSD objective is defined on the \emph{final} samples produced by a generator, rather than on intermediate sampler states. 
This final-sample focus naturally motivates efforts to distil powerful multi-step pre-trained model into compact few-step or one-step generators via VSD-style objectives; several recent works have followed this route \citep{luo2023diff,yin2024one,nguyen2024swiftbrush}.

Concretely, VSD can be viewed as minimizing a time-averaged divergence between the noisy marginal produced by the student generator and the corresponding noisy marginal of a pretrained reference model. Writing $q_{t}$ and $p_{t}$ for the generator and reference noisy marginals at time $t$, respectively, the gradient of a typical score-distillation loss admits the following approximation:
\begin{equation}\label{eq:kl-grad}
\begin{aligned}
\nabla_\theta\mathcal{L}_\text{VSD} &=
    \mathbb{E}_t\left[\nabla_\theta \text{KL}(q_{t} \; \| \; p_{t}) \right]  \approx -\mathbb{E}_{t,\epsilon}\left[w(t)\big(s_\phi({x}_t, t) - s_\psi({x}_t, t)\big) \frac{\mathrm{d}G_\theta(\epsilon)}{\mathrm{d}\theta} \right],
\end{aligned}
\end{equation}
where $w(t)$ is a scalar weighting function over timesteps, and the noisy state ${x}_t$ is obtained by applying the forward diffusion kernel 
at time $t$ to the generator output $G_\theta(\epsilon)$. 
$s_\phi(\cdot,t)$ and $s_\psi(\cdot,t)$ denote the pre-trained score function on reference data and the auxiliary score function on the student-generated data evaluated at timestep $t$, respectively. 
Intuitively, the score difference $s_\phi-s_\psi$ provides a learning signal that pushes the student’s generated noisy marginals toward those of the pre-trained teacher model, and backpropagating through $G_\theta$  to update the generator parameters $\theta$.

\subsection{Bregman Divergence for Density Ratio Matching}
\label{sec:Breg_DRM}
Given two probability distributions $p^*(x)$ and $q^*(x)$, the goal of \emph{density ratio matching} is to learn a ratio model $r_\theta(x)$ that approximates the true density ratio $r^*(x) \coloneqq \frac{q^*(x)}{p^*(x)}$ based on i.i.d. samples drawn from both distributions.  

The \emph{Bregman divergence} provides a flexible and theoretically grounded measure for comparing functions such as density ratios. It generalizes the notion of squared Euclidean distance to a broad class of divergences that share similar geometric and convexity properties \cite{sugiyama2012density, banerjee2005clustering}.
Let $h$ be a differentiable and strictly convex function. The Bregman divergence associated with $h$ between two functions $r$ and $r^*$ is defined as \cite{sugiyama2012density,kim2025preference}:
\begin{equation}
\label{eq:breg_general}
    D_h(r \| r^*) = \int p(x) \Big[ h(r(x)) - h(r^*(x)) - h'(r^*(x)) (r(x) - r^*(x)) \Big] \, \dd x.
\end{equation}

This divergence is positive-definite, which means it is always non-negative and equals zero if and only if $r(x) = r^*(x)$ almost everywhere with respect to $p(x)$, which is the density implicitly defined in $r(x)$.
Many well-known divergences arise as special cases of the Bregman divergence for specific choices of the convex function $h$. For instance, the \emph{squared loss} corresponds to $h(r) = \tfrac{1}{2}r^2$, leading to least-squares density ratio estimation \cite{sugiyama2008direct} and the \emph{KL divergence} corresponds to $h(r) = r \log r - r$.
More instances can be found in \cref{tab:bregman_instances}.
This unifying framework allows density ratio estimation to be interpreted as minimizing a Bregman divergence under different convex function $h$, providing a general connection between statistical divergences and convex analysis \cite{bregman1967relaxation,banerjee2005clustering}.

\section{Method}
\label{sec:method}

In this section, we introduce a general distillation framework, termed \method{}, which is derived from the Bregman divergence for density ratio matching formulation in \cref{sec:Breg_DRM}. 
The core idea is to align the student distribution $q(x)$, induced by a one-step generator $G_\theta$, with the teacher distribution $p(x)$.
Since the student distribution $q(x)$ is implicitly defined by the generator through the push-forward measure of the prior, i.e., $x = G_\theta(\epsilon)$ with $\epsilon \sim \mathcal{N}(0,I)$, the distribution $q(x)$ and its density ratio depend on the generator parameters $\theta$.  
Let $r(x) = \frac{q(x)}{p(x)}$ denote the density ratio between the student and teacher distributions. 
Perfect alignment hence corresponds to $r(x) = 1$ for all $x$, which motivates minimizing a divergence between $r(x)$ and the target ratio $1$ in \cref{eq:breg_general}:
\begin{equation}\label{eq:berg_1}
    D_h(r \| 1)
    = \int p(x) \Big[ h(r(x)) - h(1) - h'(1)(r(x) - 1) \Big] \, \dd x.
\end{equation}
Minimizing this divergence with respect to $\theta$ encourages the student generator $G_\theta$ to produce samples whose induced distribution $q(x)$ matches the teacher distribution $p(x)$.

Following prior work on one-step and few-step distillation of diffusion models~\citep{luo2023diff, yin2024one, nguyen2024swiftbrush}, we can derive the analytical form of the gradient of the Bregman divergence in \cref{eq:berg_1}. 
The resulting expression corresponds to a weighted variant of the gradient used in the KL-based objective (\cref{eq:kl-grad}), where the weight is a function of the density ratio $r(x)$, analogous to the formulation in $f$-distill~\citep{xu2025one}.

To further generalize this result as in VSD, we consider the intermediate distributions $p_t$ and $q_t$ obtained via the diffusion forward process. 
This allows the Bregman-based distillation gradient to be evaluated at arbitrary diffusion timesteps.
The following theorem formally characterizes the gradient of the Bregman divergence in this general setting.


\begin{theorem}[Gradient of Bregman divergence]\label{thm:bregman-gradient}
Let $p_t$ be a reference (teacher) marginal density at time $t$ and let
$q_t=q_{\theta,t}$ be the marginal induced by the generator $G_\theta$ at time $t$. These intermediate densities are obtained via the forward diffusion process.
Define the intermediate density ratio $
r_t(x)\coloneqq \frac{q_{\theta,t}(x)}{p_t(x)}$.
Assume that $h$ is twice continuously differentiable.
Then the gradient of the Bregman divergence $D_h(r_t\|1)=\mathbb{E}_{p_t}[h(r_t)]-h(1)$
with respect to $\theta$ admits the following form:
\begin{equation}\label{eq:breg-loss-theorem}
\nabla_\theta D_h(r_t\|1)
= -\mathbb{E}_{\epsilon}\Big[\,h''(r_t(x_t))\,r_t(x_t)\,\big(\nabla_{x_t}\log p_t({x_t})-\nabla_x\log q_{\theta,t}({x_t})\big)\,
\nabla_\theta G_\theta(\epsilon)\Big].
\end{equation}
\end{theorem}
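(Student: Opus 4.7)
The plan is to first simplify $D_h(r_t\|1)$, then differentiate under the integral, and finally re-express the resulting integral via a pathwise (reparameterization) identity that converts $\nabla_\theta q_{\theta,t}$ into a spatial gradient acting on $x_t$. Concretely, starting from \cref{eq:berg_1} and using $\mathbb{E}_{p_t}[r_t] = \int q_{\theta,t}\,\dd x = 1$, the affine term $h'(1)(r_t-1)$ integrates to zero against $p_t$, giving
\begin{equation*}
D_h(r_t \| 1) \;=\; \int p_t(x)\, h(r_t(x))\, \dd x \;-\; h(1).
\end{equation*}
Only $q_{\theta,t}$ (and hence $r_t$) depends on $\theta$ here, so differentiating under the integral and cancelling the $p_t(x)$ with the $1/p_t(x)$ coming from the $\theta$-derivative of $r_t$ yields
\begin{equation*}
\nabla_\theta D_h(r_t\|1) \;=\; \int h'(r_t(x))\, \nabla_\theta q_{\theta,t}(x)\, \dd x.
\end{equation*}

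The next step is the key manoeuvre. I treat $\psi(x) := h'(r_t(x))$ as a \emph{fixed} function of $x$, evaluated at the current value of $\theta$, so that it is no longer a function of the variable being differentiated. Since $x_t$ is a reparameterized sample obtained by applying the forward diffusion map to the generator output $G_\theta(\epsilon)$, the standard pathwise identity gives
\begin{equation*}
\int \psi(x)\, \nabla_\theta q_{\theta,t}(x)\, \dd x
\;=\; \nabla_\theta\, \mathbb{E}_\epsilon[\psi(x_t)]
\;=\; \mathbb{E}_\epsilon\bigl[\nabla_{x_t}\psi(x_t)\, \nabla_\theta x_t\bigr].
\end{equation*}
This bypasses any need to manipulate the implicit density $q_{\theta,t}$ directly.

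It then remains to apply the chain rule to $\psi$. Using the quotient rule on $r_t = q_{\theta,t}/p_t$,
\begin{equation*}
\nabla_{x_t} r_t(x_t) \;=\; r_t(x_t)\,\bigl(\nabla_{x_t}\log q_{\theta,t}(x_t) - \nabla_{x_t}\log p_t(x_t)\bigr),
\end{equation*}
so $\nabla_{x_t}\psi(x_t) = h''(r_t(x_t))\,r_t(x_t)\,\bigl(\nabla_{x_t}\log q_{\theta,t}(x_t) - \nabla_{x_t}\log p_t(x_t)\bigr)$. Substituting this, reversing the sign of the score difference to match the paper's convention, and absorbing the forward-diffusion Jacobian into $\nabla_\theta G_\theta(\epsilon)$ (exactly as the VSD gradient \cref{eq:kl-grad} does) recovers \cref{eq:breg-loss-theorem}.

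The main obstacle is the rigorous justification of interchanging $\nabla_\theta$ with both the integral over $x$ and the expectation over $\epsilon$, together with the identification of $\psi$ as a $\theta$-independent function at the point of linearization. I would handle this via standard dominated-convergence arguments under the assumptions that $h$ is $C^2$, the marginals $p_t$ and $q_{\theta,t}$ are smooth and positive on their support, and $G_\theta$ is differentiable with integrable pathwise derivative — regularity conditions implicit throughout the variational score-distillation literature this framework builds on. Once these are granted, everything else is routine chain-rule bookkeeping.
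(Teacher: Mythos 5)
Your proof is correct and follows essentially the same route as the paper's: cancel the affine term using $\mathbb{E}_{p_t}[r_t]=1$, differentiate under the integral to obtain $\int h'(r_t)\,\nabla_\theta q_{\theta,t}\,\dd x$, reparameterize through $G_\theta$, and apply the chain rule to $h'(r_t)$. If anything, you are more careful than the paper in flagging that $h'(r_t)$ must be held $\theta$-fixed (stop-gradient) before invoking the pathwise identity --- the paper writes this step as $\nabla_\theta \mathbb{E}_{q_{\theta,t}}[h'(r_t)]$ without comment, which is only valid under exactly the convention you make explicit.
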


\begin{proof}
Starting from the generalized Bregman divergence:
\begin{equation}
    D_h(r_t \| 1)
    = \int p_t(x) \Big[ h(r_t(x)) - h(1) - h'(1)(r_t(x) - 1) \Big] \, \dd x,
\end{equation}
where $r_t(x)=q_{\theta,t}(x)/p_t(x)$. The reference marginal $p_t(x)$ does not depend on $\theta$, hence differentiating under the integral sign gives:
\begin{equation}
    \nabla_\theta D_h(r_t\|1) = \int p_t(x)\,h'(r_t(x))\,\nabla_\theta r_t(x)\,\dd x - h'(1)\int p_t(x)\,\nabla_\theta r_t(x)\,\dd x.
\end{equation}
The second integral vanishes because
\begin{equation}
    \int p_t(x)\,\nabla_\theta r_t(x)\,\dd x = \nabla_\theta\int p_t(x) r_t(x)\,\dd x = \nabla_\theta\int q_{\theta,t}(x)\,\dd x = \nabla_\theta 1 = 0.
\end{equation}
Hence
\begin{equation}\label{eq:proof-step1}
\nabla_\theta D_h(r_t\|1)
= \int p_t(x)\,h'(r_t(x))\,\nabla_\theta r_t(x)\,\dd x = \int h'(r_t(x))\,\nabla_\theta q_{\theta,t}(x)\,\dd x.
\end{equation}
Utilizing the following reparameterization identity:
\[
\nabla_\theta \E_{x\sim q_{\theta,t}}[g(x)] \;=\; \E_{z\sim p(z)}\big[\nabla_\theta g(G_\theta(z))\big].
\]
Set $g(x)=h'(r_t(x))$. Then
\[
\nabla_\theta D_h(r_t\|1)=\nabla_\theta \E_{q_{\theta,t}}[h'(r_t(x))]
= \E_{\epsilon}\!\big[\nabla_x h'(r_t(x))\big]_{x=G_\theta(\epsilon)} \nabla_\theta G_\theta(\epsilon).
\]
Apply the chain rule $\nabla_x h'(r_t)=h''(r_t)\nabla_x r_t$ and $\nabla_x r_t = r_t(\nabla_x\log q_{\theta,t}-\nabla_x\log p_t)$ yields \cref{eq:breg-loss-theorem} stated in the theorem.
\end{proof}

In practice, the density ratio on noisy data, $r_t(x) = \frac{q_t(x)}{p_t(x)}$, can be estimated using a classifier trained to distinguish samples from the student generator $G_\theta$ and those from the teacher model or reference dataset. Under the common assumption that the pre-trained teacher model already captures the data distribution well, it is often both preferable and computationally cheaper to draw real samples directly from the dataset rather than repeatedly sampling from the teacher.
For a discriminator output  $\sigma(l_t(x))$, where $l_t(x)$ denotes the classifier logits at noise level $t$, the optimal output satisfies
$\sigma(l_t^*(x)) = \frac{p_t(x)}{p_t(x) + q_t(x)}$.
This implies that the density ratio can be recovered as $r_t(x) = e^{-l_t(x)}$.
In this framework, the trained classifier not only provides an estimate of the local density ratio but can also be repurposed as a discriminator for adversarially training the student generator.


Compared to $f$-distill~\citep{xu2025one}, our framework places fewer constraints on the convex function, which yields greater flexibility in choosing divergence families for distillation. 
Recently, Uni-Instruct \citep{wang2025uni} proposes a unifying view that connects integral $f$-divergences \citep{yin2024one,xu2025one} and score-based divergences \citep{zhou2024score,luo2024one}.
However, \method{} is complementary to this line of work: it provides a Bregman-divergence perspective that admits a broader class of function $h$ and recovers many existing objectives as special cases.
Together, these formulations offer a more complete picture of distribution-based diffusion distillation.

\section{Experiments}
\label{sec:experiments}

\begin{figure*}[t]
    \vspace{-40pt}
    \centering
    \setlength{\tabcolsep}{0.5pt} 
    \renewcommand{\arraystretch}{0} 
    \begin{tabular}{ccccc}
        \includegraphics[width=0.20\textwidth]{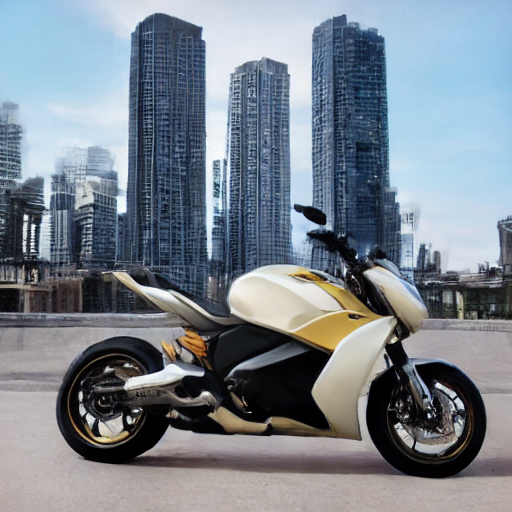} & 
        
        \includegraphics[width=0.20\textwidth]{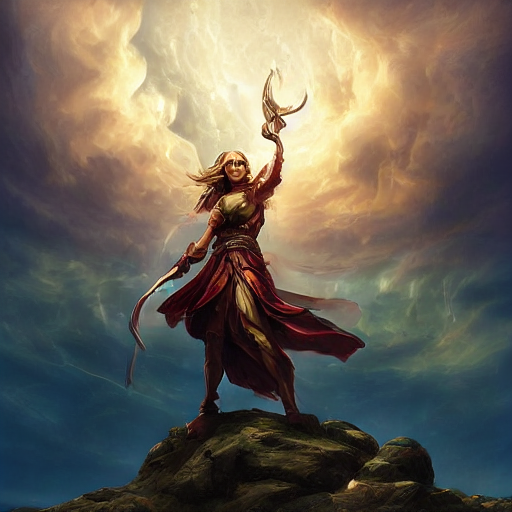} &
        
        \includegraphics[width=0.20\textwidth]{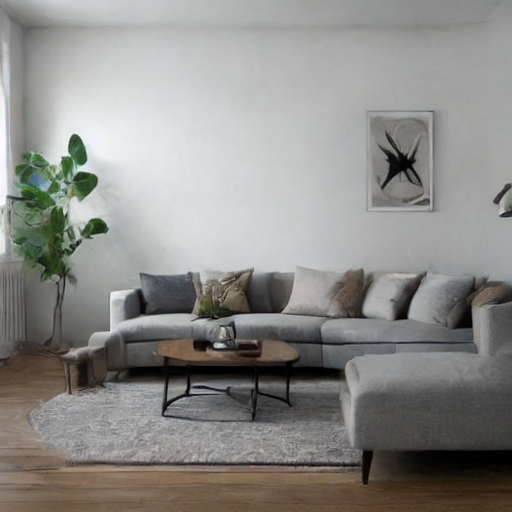} &

        \includegraphics[width=0.20\textwidth]{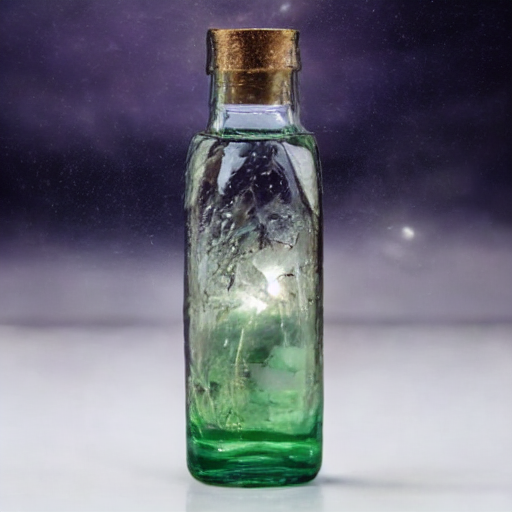} &
        
        \includegraphics[width=0.20\textwidth]{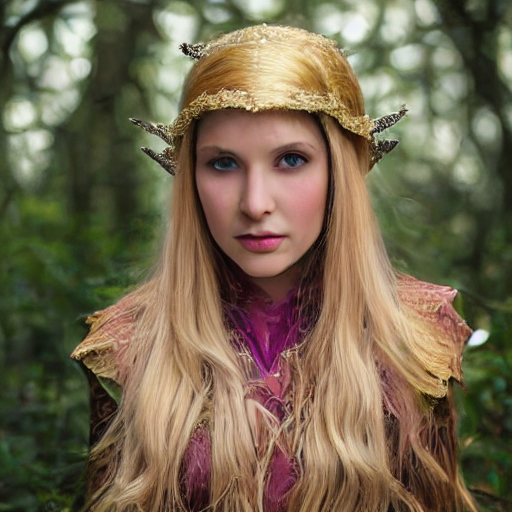} \\ 
    \end{tabular}
    \caption{Images generated with only one-step by model trained with \method. More images are shown in Appendix~\ref{sec:app_quali}}
    \label{fig:onestep_images}
\end{figure*}

\begin{wrapfigure}{r}{0.48\linewidth} 
  \vspace{-5mm} 
  \centering
  \includegraphics[width=\linewidth]{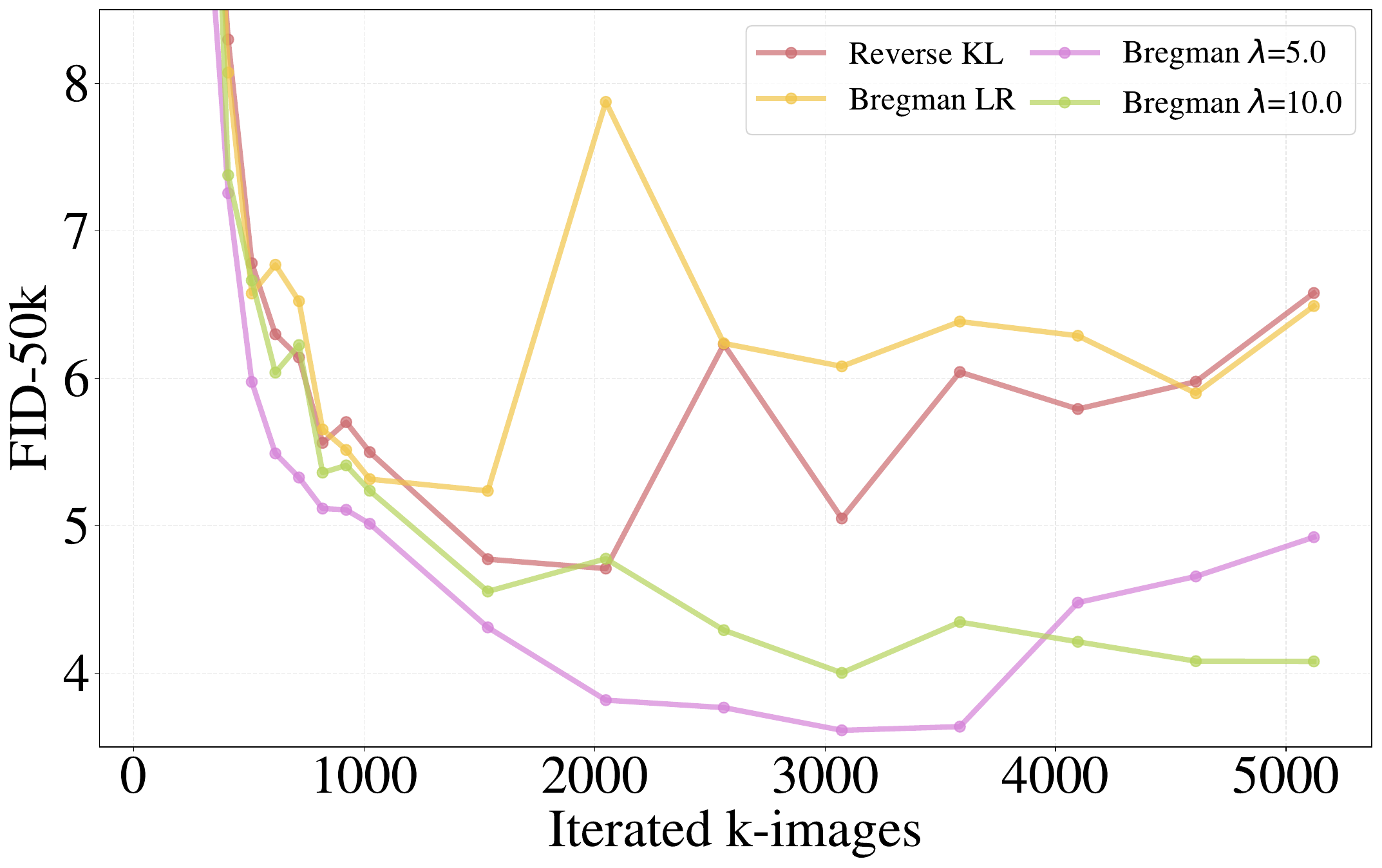}
  \vspace{-2mm}
  \caption{\textbf{Evolution of one-step FID against number of iterated images}: \method{} achieves a lower one-step FID.}
  \label{fig:cifar_fid_main}
\end{wrapfigure}

To evaluate the effectiveness of the proposed method, we conduct experiments on both unconditional image and text-to-image generation tasks. 
Quantitative results are reported on the CIFAR-10 dataset using an EDM teacher \cite{karras2022elucidating}, while qualitative results are presented for text-to-image generation with a Stable Diffusion v1.5~\cite{rombach2022high} teacher.
As shown in \cref{fig:cifar_fid_main}, when applying the SBA-type Bregman divergence with $\lambda = 5$, our method achieves a lower one-step FID compared to the baseline reverse KL distillation approach. In addition, \cref{fig:onestep_images} illustrates representative one-step samples generated by our distilled text-to-image model, demonstrating high visual quality and fidelity to the text prompts.
More experimental details (\ref{sec:expr_setup}), qualitative (\ref{sec:app_quali}), quantitative (\ref{sec:app_quanti}) results and additional ablations (\ref{sec:app_quanti}) are provided in Appendix.

\section{Conclusion}
\label{sec:conclusion}
We introduced \method{}, a generalized framework for diffusion model distillation grounded in Bregman divergences.
Empirically, our method improves one-step generation quality on CIFAR-10 and produces competitive visual results in text-to-image synthesis, demonstrating both its theoretical generality and practical effectiveness.

\clearpage



\bibliography{main}
\bibliographystyle{plain}

\clearpage
\appendix
\setcounter{page}{1}

\vspace{0.2cm}

\section{Limitations and Future Works.}

This work primarily presents preliminary results. 
In future studies, we plan to extend our approach to a wider range of teacher models and conduct comprehensive comparisons with state-of-the-art methods. 
Moreover, while our current experiment only use the classifier in \cref{eq:breg-loss-theorem}, we aim to incorporate adversarial training based on it to further enhance the performance of one-step generation.


\section{Related Works: Diffusion Distillation}
\label{sec:relatedworks}

Distillation methods for accelerating diffusion and flow models fall into two broad families. \emph{ODE-based} distillation exploits the teacher's Probability Flow ODE (PF-ODE) to derive regression-style objectives for a student model \citep{luhman2021knowledge,song2023consistency,liu2022rectified,salimans2022progressive,gu2023boot,meng2023distillation,yan2024perflow,frans2024one,zhu2025slimflow,geng2025mean,boffi2025flow,boffi2025build}. These approaches frame distillation as learning an ODE-consistent mapping, often enabling stable one- or few-step samplers which preserves the coupling induced by teacher models' PF-ODE.
By contrast, \emph{distribution-based} methods align the student generator’s output distribution with the teacher’s multi-step sampling distribution (or with a specified data distribution) without relying on an explicit PF-ODE. This class covers divergence- and adversarial-style matching techniques \citep{luo2023diff,yin2024one,yin2024improved,xu2024ufogen,kim2023consistency,zhou2024score,zhou2024long,zhou2024adversarial,nguyen2024swiftbrush,dao2025swiftbrush,luo2025one,zhou2025few,wang2024prolificdreamer,xie2024distillation,salimans2025multistep,zhu2025di,zhu2025soft,zheng2025ultra,wang2025uni}. 
In $f$-distil,~\cite{xu2025one} extend the VSD framework from reverse Kullback–Leibler (KL) divergence to more general $f$-divergence and use discriminator to estimate the density ratio.
A notable feature of many distribution-based methods is that they match not only the final data distribution but also the intermediate noisy-data distributions encountered during sampling; this property has also been referred to as \emph{Interpolation Distillation} \citep{liu2025blessing}.




\section{Experimental Setup}
\label{sec:expr_setup}

\noindent\textbf{Datasets and Pre-trained Teacher Models.}~~
Our experiments to demonstrate the effectiveness of \method{} are performed on the CIFAR-10~\cite{krizhevsky2009learning} 32$\times$32 for unconditional generation and on the LAION~\cite{schuhmann2022laion} and COCO~\cite{lin2014microsoft} datasets for text-to-image generation.
The pre-trained teacher models are adopted from the official checkpoints from previous works, EDM~\cite{karras2022elucidating}, and Stable Diffusion v1.5~\cite{rombach2022high}.

\noindent\textbf{Implementation Details.}~~ 
All experiments are conducted on a single NVIDIA H100 GPU.
For CIFAR-10 ($32\times32$) experiments, we adopt the U-Net architecture of NCSN$++$ \cite{song2020score}.
The implementation is based on the SiDA framework \cite{zhou2024adversarial}, where the discriminator is built upon the auxiliary model encoder, and the mean feature vector is used as the predicted discriminator logits.
For the text-to-image experiment, we use Stable-Diffusion v1.5~\cite{rombach2022high}, a 900M-paramenter U-net-based model, trained on LAION~\cite{schuhmann2022laion} and distilled at $512\times 512$ resolution.
All results presented in the paper are one-step generated using our distilled generator.

\noindent\textbf{Evaluation Metrics.}~~
The metrics we use for quantitative results on CIFAR-10 are Fr\'echet Inception Distance (FID) \cite{heusel2017gans} and Inception Score (IS) \cite{salimans2016improved}.
In our experiments, FID is computed with 50,000 generated samples compared against the training set using Clean-FID \cite{parmar2022aliased}, while IS is calculated from the same generated images based on their Inception features.


\section{Instances in Bregman divergence}

\begin{table}[!h]
    \centering
    \small
    \label{tab:bregman_instances}
    \renewcommand{\arraystretch}{1.2}
    \setlength{\tabcolsep}{6pt}
    \begin{tabular}{lccc}
        \toprule
        Name & $h(r)$ & $h''(r)r$ & $h''(e^{-l})e^{-l}$ \\
        \midrule
        LR & ${r\log r - (1+r)\log(1+r)}$ & $\frac{1}{1+r}$ & $\sigma(l)$ \\
        KL & $r\log r - r$ & $1$  & $1$ \\
        BE & $-\log r$ & $1/r$   & $e^{l}$ \\
        LS & ${r^2 / 2}$ & $r$ & $e^{-l}$ \\
        SBA & $\frac{r^{1+\lambda}-r}{\lambda(\lambda+1)}$ & $r^{\lambda}$   & $e^{-\lambda l}$ \\
        \bottomrule
    \end{tabular}
    \setlength{\abovecaptionskip}{6pt}
    \caption{Examples of different $h(r)$ in Bregman divergence and the corresponding $h''(r)r$. The choices of $h(r)$ are from \cite{nielsen2009sided,kim2025preference}.}
    \renewcommand{\arraystretch}{1.0}
\end{table}

\section{Additional qualitative results}
\label{sec:app_quali}

In Fig.~\ref{fig:qualitative_comparison_SD}, we provide additional qualitative comparisons, where our one-step student produces visually coherent and faithful samples, closely matching the teacher output across diverse prompts. Additional uncurated CIFAR-10 samples from our \method{} model are shown in Fig.~\ref{fig:cifar_visual}, demonstrating diverse one-step generation, with an FID of 3.61.

\begin{figure}[!h]
    \centering
    \renewcommand{\arraystretch}{1.2} 
    \setlength{\tabcolsep}{3pt} 
    \begin{tabular}{cccc}
        \textbf{Teacher} & \textbf{Student} & \textbf{Teacher} & \textbf{Student} \\ 

        \includegraphics[width=0.23\textwidth]{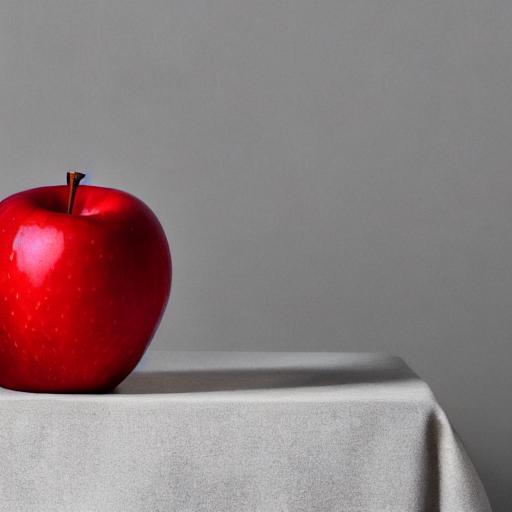} &
        \includegraphics[width=0.23\textwidth]{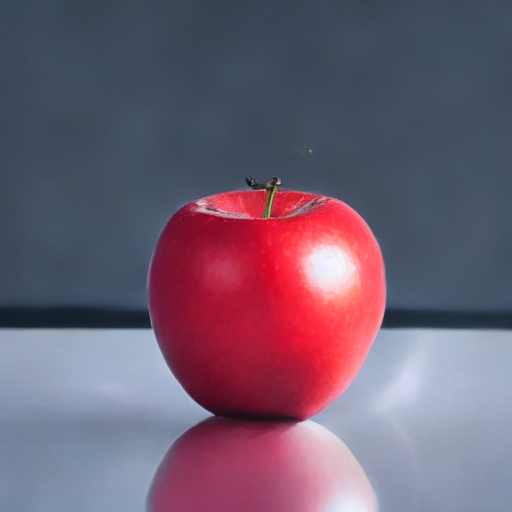} &
        \includegraphics[width=0.23\textwidth]{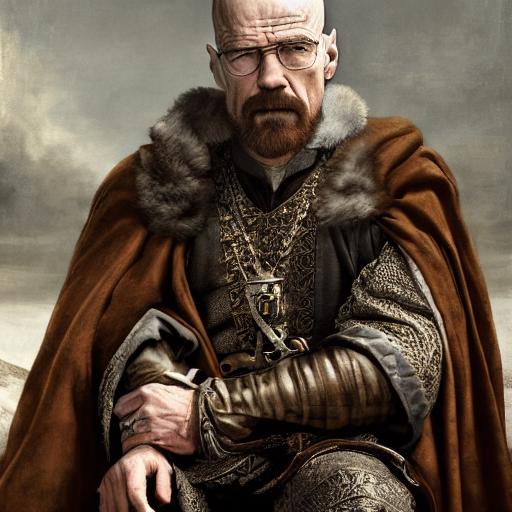} &
        \includegraphics[width=0.23\textwidth]{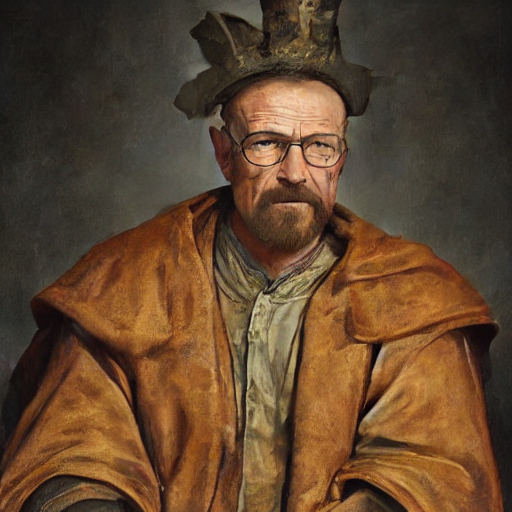} \\
        \multicolumn{2}{c}{\small \texttt{A still life of a red apple}} &
        \multicolumn{2}{c}{\small \texttt{Walter White as a medieval king}} \\[8pt]

        \includegraphics[width=0.23\textwidth]{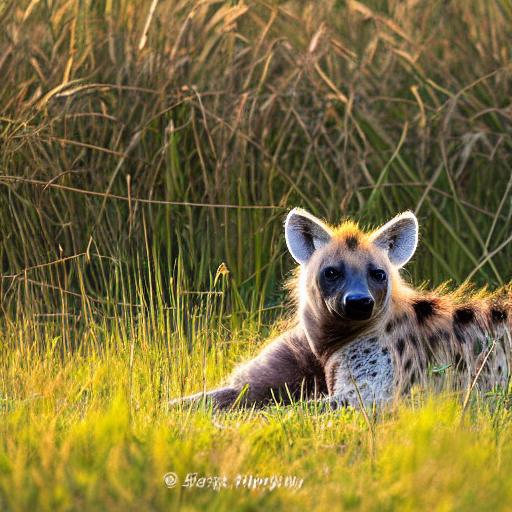} &
        \includegraphics[width=0.23\textwidth]{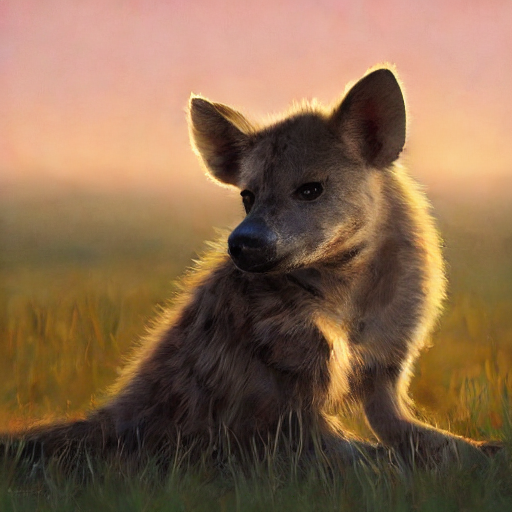} &
        \includegraphics[width=0.23\textwidth]{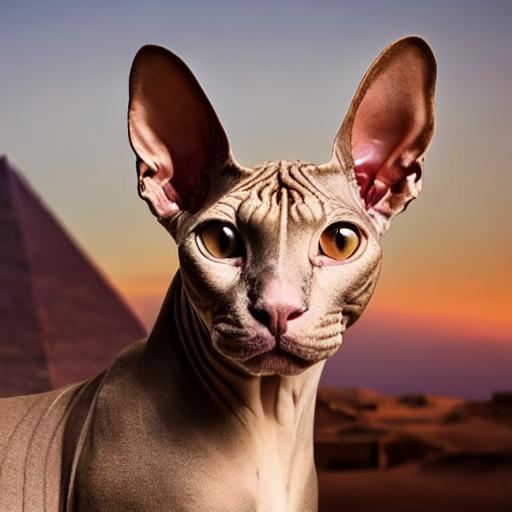} &
        \includegraphics[width=0.23\textwidth]{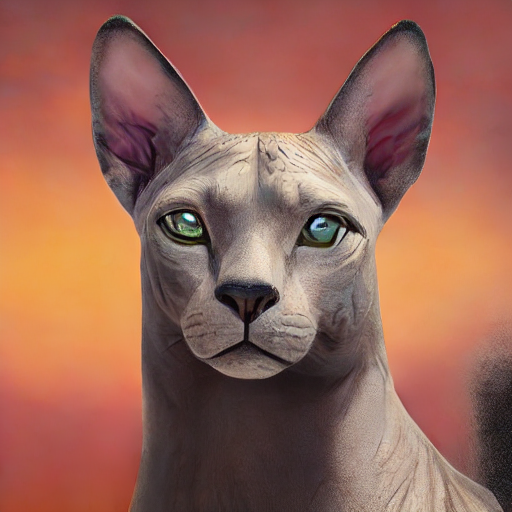} \\
        \multicolumn{2}{c}{\small \texttt{A portrait of a hyena}} &
        \multicolumn{2}{c}{\small \texttt{A portrait of a Sphynx}} \\[8pt]

        

        \includegraphics[width=0.23\textwidth]{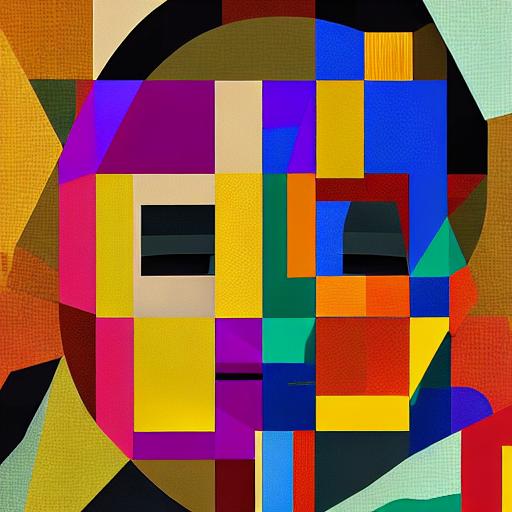} &
        \includegraphics[width=0.23\textwidth]{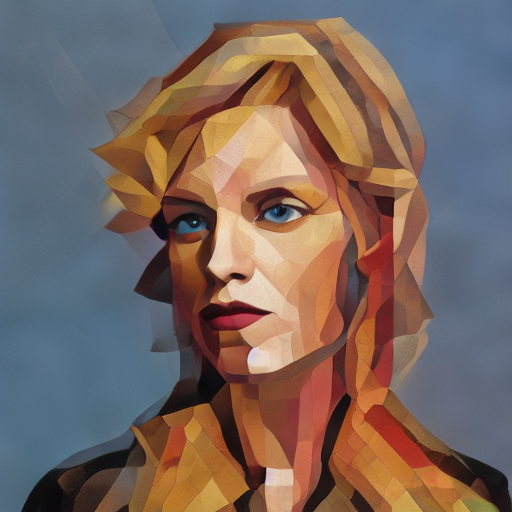} &
        \includegraphics[width=0.23\textwidth]{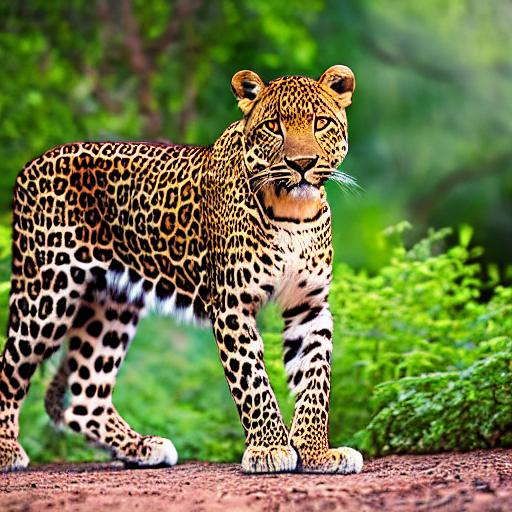} &
        \includegraphics[width=0.23\textwidth]{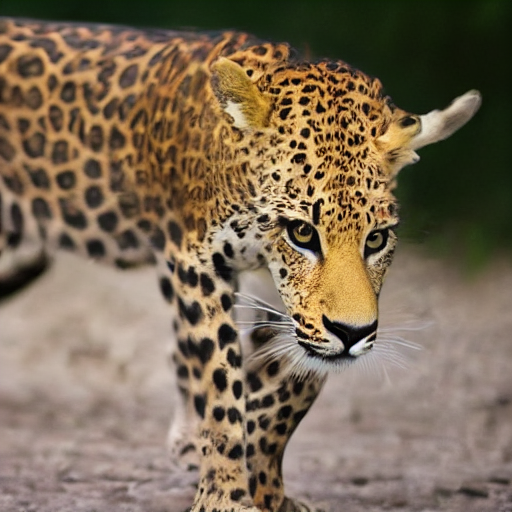} \\
        \multicolumn{2}{c}{\small \texttt{A portrait in the style of polygonal painting}} &
        \multicolumn{2}{c}{\small \texttt{A female leopard}} \\[8pt]

    \end{tabular}
    \caption{\textbf{Qualitative comparison at $512\times512$: Teacher 50 NFEs (first, third columns) vs Student 1 NFE (second, last columns) for six prompts (left and right blocks per row).} The teacher is the Stable Diffusion v1.5~\cite{rombach2022high} model.}
    \label{fig:qualitative_comparison_SD}
\end{figure}

\newpage

\section{Additional quantitative results} 
\label{sec:app_quanti}
In this section, we provide some additional quantitative results from our one-step models.

Figures \ref{fig:cifar_fid_all} and \ref{fig:cifar_is_all} show the evolution of one-step FID and IS, respectively, across different values of the Bregman parameter $\lambda$. We observe that \method consistently improves over the reverse KL baseline for several $\lambda$ configurations. In particular, settings such as $\lambda=3.0$, $\lambda=5.0$, and $\lambda=10.0$ yield the lowest one-step FID (Fig.~\ref{fig:cifar_fid_all}) and the highest one-step IS (Fig.~\ref{fig:cifar_is_all}), confirming the robustness of \method across a range of divergence parameters. Lower $\lambda$ values (e.g., $\lambda\leq 1.0$) tend to perform closer to the baseline, while negative $\lambda=-1.0$ underperforms. These results demonstrate that \method offers consistent improvements in sample quality metrics over the reverse KL distillation method.

\begin{figure}[!h]
     \centering
     \includegraphics[width=0.6\linewidth]{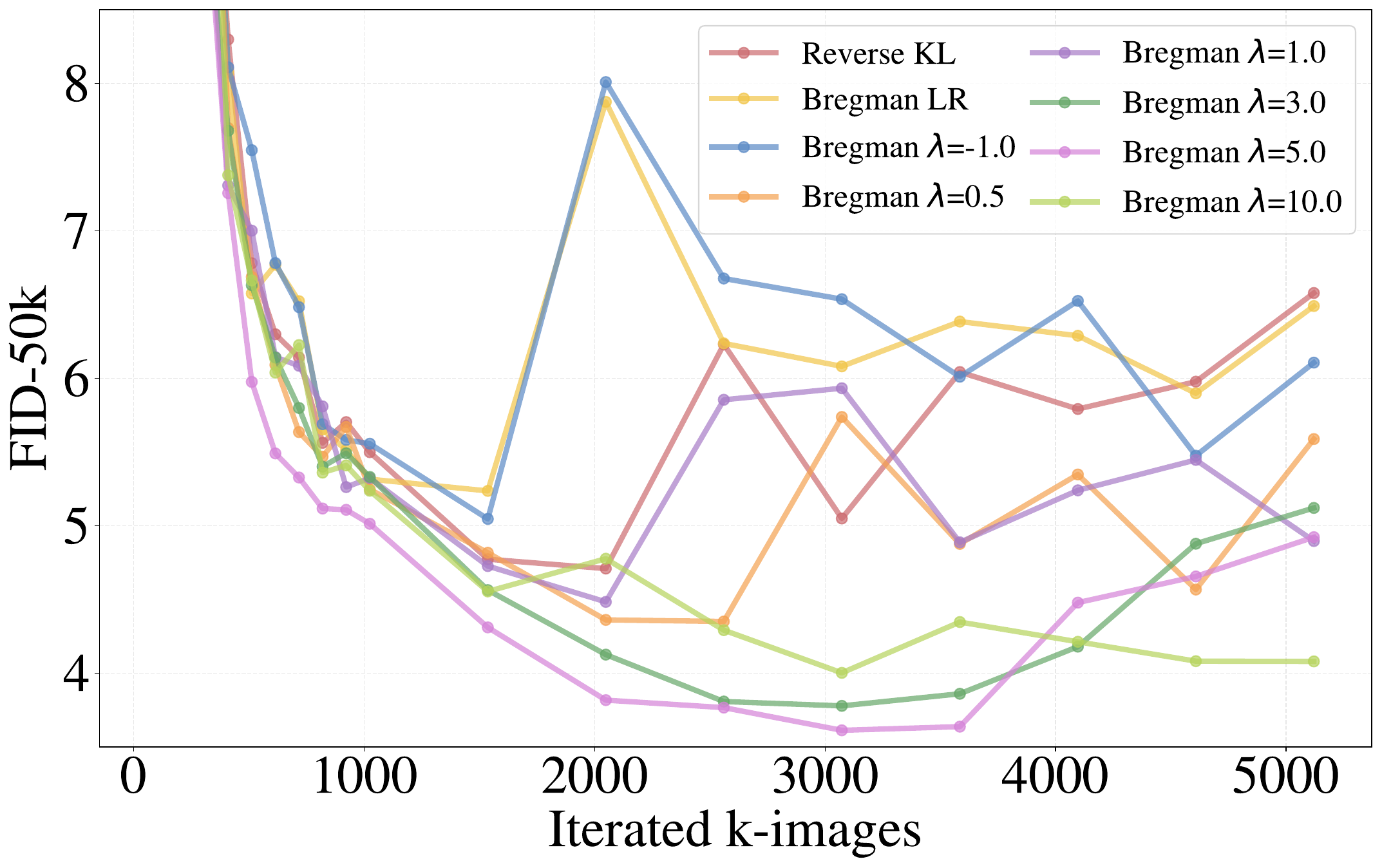}
     \caption{\textbf{Evolution of one-step FID against number of iterated images}. For $\lambda=3.0, \lambda=5.0$ or $\lambda=10.0$ \method{} achieves a lower one-step FID.}
     \label{fig:cifar_fid_all}
\end{figure}

\begin{figure}[!h]
    \centering
    \includegraphics[width=0.6\linewidth]{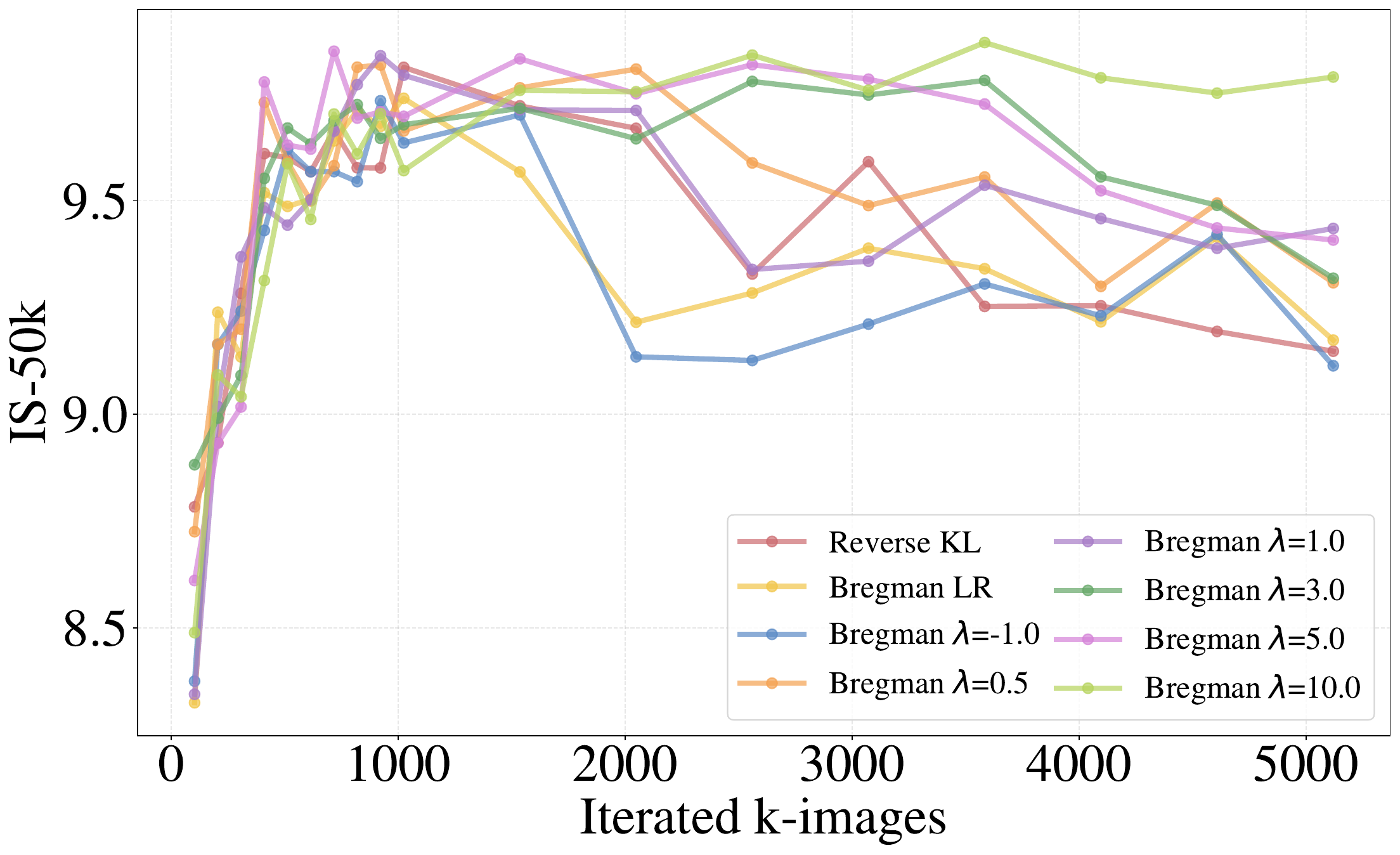}
    \caption{\textbf{Evolution of one-step IS against number of iterated images}. For $\lambda=3.0, \lambda=5.0$ or $\lambda=10.0$ \method{} achieves a higher one-step IS.}
    \label{fig:cifar_is_all}
\end{figure}

\begin{figure}[!ht]
    \centering
    \includegraphics[width=\linewidth]{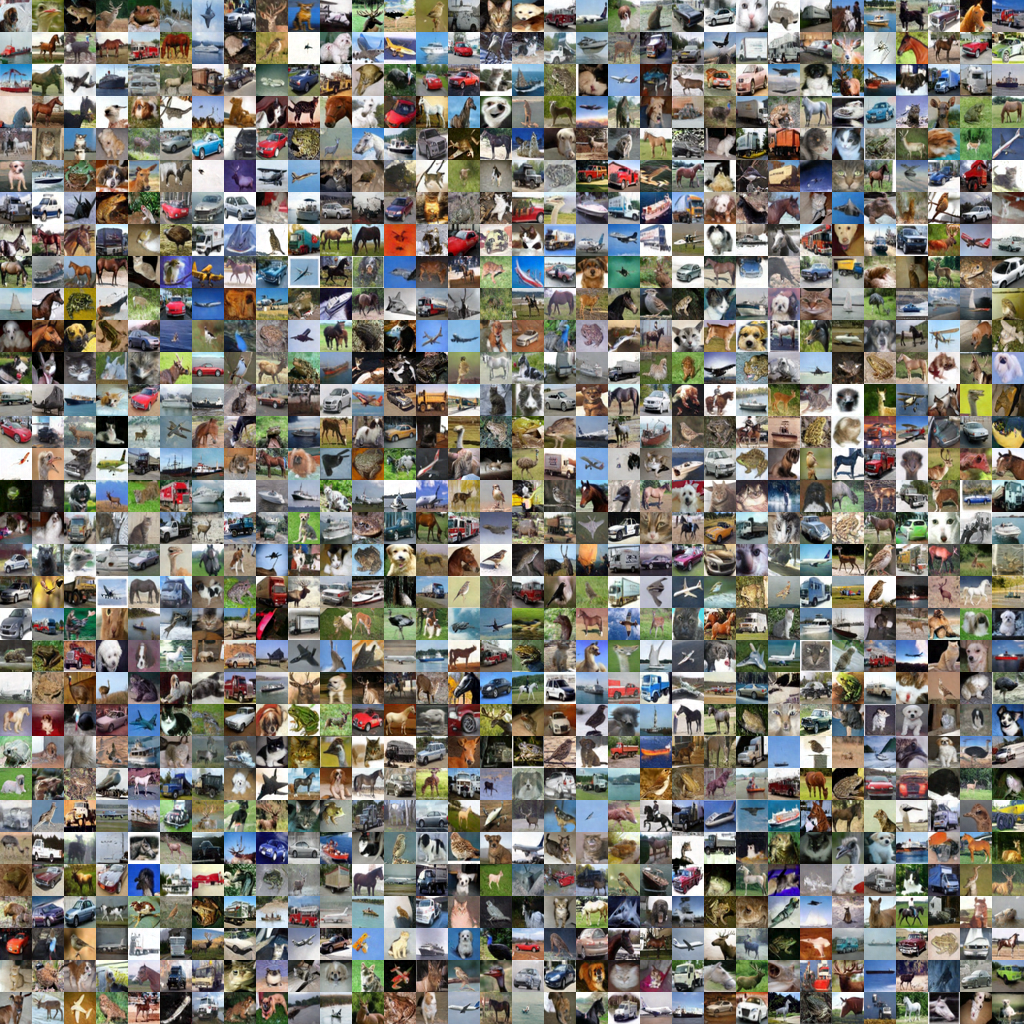}
    \caption{Uncurated samples from unconditional CIFAR-10 32$\times$32 using \method{} with single step generation (FID=3.61).}
    \label{fig:cifar_visual}
\end{figure}

\end{document}